\title{Provably Robust Metric Learning}
\author{Lu Wang\textsuperscript{\textnormal{1,2}}
\quad Xuanqing Liu\textsuperscript{\textnormal{3}}
\quad Jinfeng Yi\textsuperscript{\textnormal{2}}
\quad Yuan Jiang\textsuperscript{\textnormal{1}}
\quad Cho-Jui Hsieh\textsuperscript{\textnormal{3}} \\
\textsuperscript{1}National Key Laboratory for Novel Software Technology, \\
Nanjing University, Nanjing 210023, China\\
\textsuperscript{2}JD.com, Beijing 100101, China\\
\textsuperscript{3}Department of Computer Science,
University of California, Los Angeles, CA 90095\\
\texttt{wangl@lamda.nju.edu.cn} / \texttt{wangl@jd.com}
\quad \texttt{xqliu@cs.ucla.edu} \\
\quad \texttt{yijinfeng@jd.com}
\quad \texttt{jiangy@lamda.nju.edu.cn}
\quad \texttt{chohsieh@cs.ucla.edu}
}
\newcommand{\R}{\mathbb{R}}
\newcommand{\E}{\mathbb{E}}
\newcommand{\vdelta}{{\bm{\delta}}}
\newcommand{\vzero}{{\bm{0}}}
\newcommand{\va}{{\bm{a}}}
\newcommand{\vg}{{\bm{g}}}
\newcommand{\vp}{{\bm{p}}}
\newcommand{\vq}{{\bm{q}}}
\newcommand{\vx}{{\bm{x}}}
\newcommand{\mG}{{\bm{G}}}
\newcommand{\mM}{{\bm{M}}}
\newcommand{\mP}{{\bm{P}}}
\newcommand{\sI}{{\mathbb{I}}}
\newcommand{\sJ}{{\mathbb{J}}}
\newcommand{\sS}{{\mathbb{S}}}
\newcommand{\sX}{{\mathbb{X}}}
\newcommand{\sY}{{\mathbb{Y}}}
\DeclareMathOperator*{\argmax}{arg\,max}
\DeclareMathOperator*{\argmin}{arg\,min}
\newtheorem{theorem}{Theorem}
\DeclareMathOperator*{\kthmax}{\mathit{k}th\,max}
\DeclareMathOperator*{\kthmin}{\mathit{k}th\,min}
\begin{document}

\maketitle

\begin{abstract}
  Metric learning is an important family of algorithms for classification and similarity search,
  but the robustness of learned metrics against small adversarial perturbations is less studied.
  In this paper, we show that existing metric learning algorithms, which focus on boosting the clean accuracy,
  can result in metrics that are less robust than the Euclidean distance.
  To overcome this problem, we propose a novel metric learning algorithm
  to find a Mahalanobis distance that is robust against adversarial perturbations,
  and
  the robustness of the resulting model is certifiable.
  Experimental results show that the proposed metric learning algorithm
  improves both certified robust errors and empirical robust errors (errors under adversarial attacks).
  Furthermore, unlike neural network defenses which usually encounter a trade-off between clean and robust errors,
  our method does not sacrifice clean errors compared with previous metric learning methods.
\end{abstract}

\section{Introduction}

Metric learning has been an important family of machine learning algorithms
and has achieved successes on several problems,
including computer vision~\cite{kulis2009fast,guillaumin2009you,hua2007discriminant},
text analysis~\cite{lebanon2006metric},
meta learning~\cite{vinyals2016matching,snell2017prototypical}
and others~\cite{slaney2008learning,xiong2006kernel,ye2016what}.
Given a set of training samples,
metric learning aims to learn a good distance measurement such that items in the same class
are closer to each other in the learned metric space,
which is crucial for classification and similarity search.
Since this objective is directly related to the assumption of nearest neighbor classifiers,
most of the metric learning algorithms can be naturally
and successfully combined with $K$-Nearest Neighbor ($K$-NN) classifiers.

Adversarial robustness of machine learning algorithms
has been studied extensively in recent years
due to the need of robustness guarantees
in real world systems.
It has been demonstrated that neural networks
can be easily attacked by adversarial perturbations in the input space~\cite{szegedy2014intriguing,goodfellow2015explaining,biggio2018wild},
and such perturbations can be computed efficiently
in both white-box~\cite{carlini2017towards,madry2018towards}
and black-box settings~\cite{chen2017zoo,ilyas2019prior,cheng2020signopt,wang2020spanning}.
To tackle this issue, many defense algorithms have been proposed
to improve the robustness of neural networks~\cite{kurakin2017adversarial,madry2018towards}.
Although these algorithms can successfully defend from standard attacks,
it has been shown that many of them are vulnerable
under stronger attacks when the attacker knows the defense mechanisms~\cite{carlini2017towards}.
Therefore, recent research in adversarial defense of neural networks has shifted to the concept of ``certified defense'',
where the defender needs to provide a certification
that no adversarial examples exist within a certain input region~\cite{wong2018provable,cohen2019certified,zhang2020towards}.

In this paper, we consider the problem of learning a metric that is robust against adversarial input perturbations.
It has been shown that nearest neighbor classifiers are not as robust as expected~\cite{papernot2016transferability,wang2019evaluating,sitawarin2020minimum},
where a small and human imperceptible perturbation in the input space can fool a $K$-NN classifier,
thus it is natural to investigate how to obtain a metric that improves the adversarial robustness.
Despite being an important and interesting research problem to tackle,
to the best of our knowledge
it has not been studied in the literature.
There are several caveats that make this a hard problem:
1)~attack and defense algorithms for neural networks often
rely on the smoothness of the corresponding functions,
while $K$-NN is a discrete step function where the gradient does not exist.
2)~Even evaluating the robustness of $K$-NN
with the Euclidean distance is harder than
neural networks
--- attack and verification for $K$-NN
are nontrivial and time consuming~\cite{wang2019evaluating}.
Furthermore, none of the existing work
have considered general Mahalanobis distances.
3)~Existing algorithms for evaluating the robustness of $K$-NN,
including attack~\cite{yang2020robust} and verification~\cite{wang2019evaluating}, are often non-differentiable,
while training a robust metric
will require a differentiable measurement of robustness.

To develop a provably robust metric learning algorithm,
we formulate an objective function to learn a Mahalanobis distance,
parameterized by a positive semi-definite matrix $\mM$,
that maximizes the minimal adversarial perturbation on each sample.
However, computing the minimal adversarial perturbation is intractable for $K$-NN,
so to make the problem solvable, we propose an efficient formulation for lower-bounding the minimal adversarial perturbation,
and this lower bound can be represented as an explicit function of $\mM$ to enable the gradient computation.
We further develop several tricks to improve the efficiency of the overall procedure.
Similar to certified defense algorithms in neural networks,
the proposed algorithm can provide a certified robustness improvement on the resulting $K$-NN model with the learned metric.
Decision boundaries of 1-NN with different Mahalanobis distances
for a toy dataset
(with only four orange triangles and three blue squares
in a two-dimensional space) 
are visualized in Figure~\ref{fig:toy}.
It can be observed that the proposed Adversarial Robust Metric Learning (ARML) method
can obtain a more ``robust'' metric on this example.

We conduct extensive experiments on six real world datasets and show that the proposed algorithm
can improve both certified robust errors and the empirical robust errors (errors under adversarial attacks)
over existing metric learning algorithms.

\begin{figure}[ht]
  \centering
  \begin{subfigure}[b]{.32\linewidth}
    \centering
    \includegraphics[width=.8\linewidth]{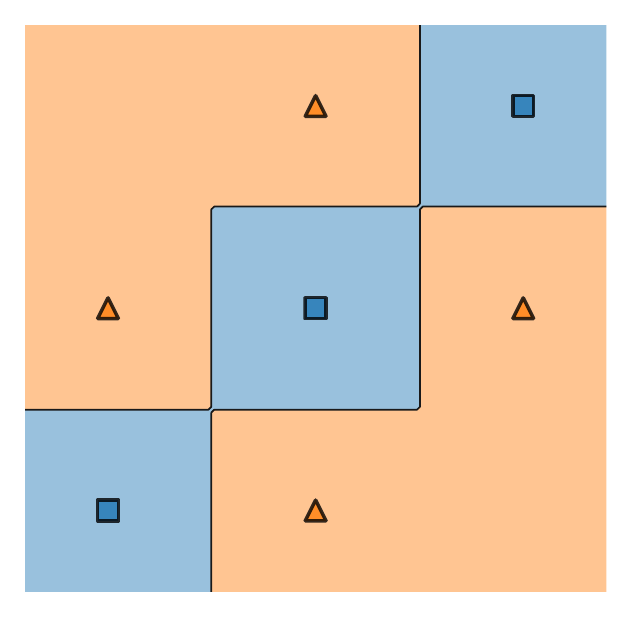}
    \caption{Euclidean}
  \end{subfigure}
  \begin{subfigure}[b]{.32\linewidth}
    \centering
    \includegraphics[width=.8\linewidth]{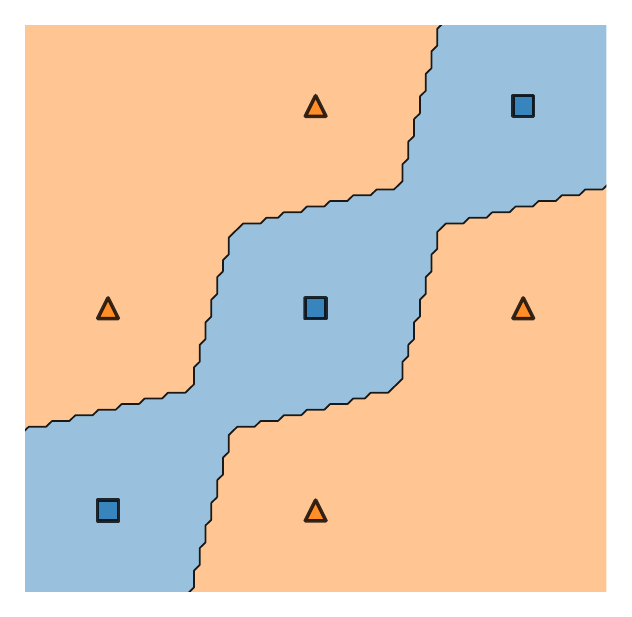}
    \caption{NCA~\cite{goldberger2004neighbourhood}}
  \end{subfigure}
  \begin{subfigure}[b]{.32\linewidth}
    \centering
    \includegraphics[width=.8\linewidth]{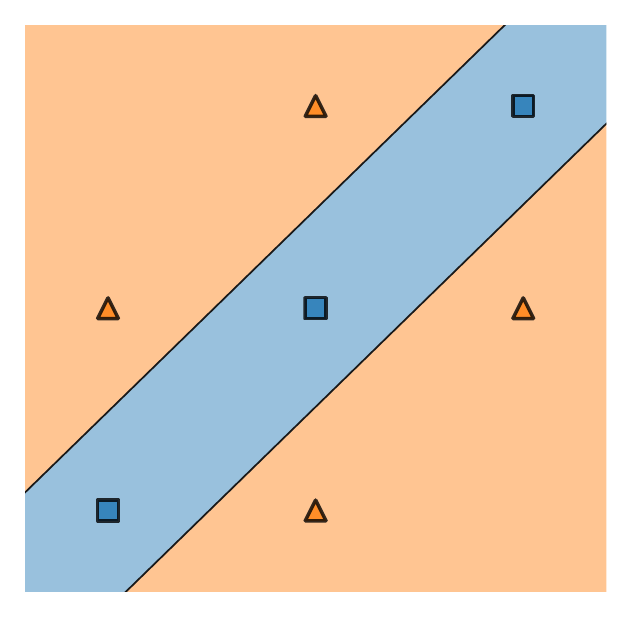}
    \caption{ARML (Ours)}
  \end{subfigure}
  \caption{Decision boundaries of 1-NN with different Mahalanobis distances.}
  \label{fig:toy}
\end{figure}



\section{Background}

\paragraph{Metric learning for nearest neighbor classifiers}
A nearest-neighbor classifier based on a Mahalanobis distance could be
characterized by a training dataset and a positive semi-definite matrix.
Let $\sX = \R^D$ be the instance space,
$\sY=[C]$ the label space where $C$ is the number of classes.
$\sS = \{(\vx_i, y_i)\}_{i=1}^N$ is the training set
with $(\vx_i, y_i) \in \sX \times \sY$ for every $i \in [N]$.
$\mM \in \R^{D \times D}$ is a positive semi-definite matrix.
The Mahalanobis distance for any $\vx, \vx^\prime \in \sX$ is defined as
\begin{align}
  d_\mM(\vx, \vx^\prime) = (\vx - \vx^\prime)^\top \mM (\vx - \vx^\prime),
\end{align}
and a Mahalanobis $K$-NN classifier  $f: \sX \rightarrow \sY$ will
find the $K$ nearest neighbors of the test instance in $\sS $
based on the Mahalanobis distance,
and then predicts the label based on majority voting of these neighbors.


Many metric learning approaches aim to learn a good Mahalanobis distance $\mM$
based on training data~\cite{goldberger2004neighbourhood,davis2007information,weinberger2009distance,jain2010inductive,sugiyama2007dimensionality}
(see more discussions in Section~\ref{sec:related}).
However, none of these previous methods are trying to find a metric that is robust to small input perturbations.

\paragraph{Adversarial robustness and minimal adversarial perturbation}
There are two important concepts in adversarial robustness:
adversarial attack and adversarial verification (or robustness verification).
Adversarial attack aims to find a perturbation to change the prediction,
and adversarial verification aims to find a radius
within which no perturbation could change the prediction.
Both of them can be reduced to the problem of
finding the minimal adversarial perturbation.
For a classifier $f$ on an instance $(\vx, y)$,
the \emph{minimal adversarial perturbation} can be defined as
\begin{align}
  \argmin_{\vdelta} \ \lVert\vdelta\rVert \ \ \text{s.t.}\  f(\vx + \vdelta) \neq y,
  \label{eq:minimal_perturbation}
\end{align}
which is the smallest perturbation that could lead to ``misclassification''.
Note that if $(\vx, y)$ is not correctly classified,
the minimal adversarial perturbation is $\vzero$, i.e., the zero vector.
Let $\vdelta^*(\vx, y)$ denote the optimal solution
and $\epsilon^*(\vx, y) = \lVert \vdelta^*(\vx, y) \rVert$ the optimal value.
Obviously, $\vdelta^*(\vx, y)$ is also the solution
of the optimal adversarial attack,
and $\epsilon^*(\vx, y)$ is the solution of the optimal adversarial verification.
For neural networks,
it is often NP-complete to solve \eqref{eq:minimal_perturbation} exactly~\cite{katz2017reluplex},
so many efficient algorithms have been proposed
for attack~\cite{goodfellow2015explaining,carlini2017towards,brendel2018decision,cheng2020signopt}
and verification~\cite{wong2018provable,weng2018towards,mirman2018differentiable},
corresponding to computing upper and lower bounds of the minimal adversarially perturbation respectively.
However, these methods do not work for discrete models such as nearest neighbor classifiers.

In this paper our algorithm will be based on a novel derivation of a lower bound of
the minimal adversarial perturbation for Mahalanobis $K$-NN classifiers.  
To the best of our knowledge, there has been no previous work tackling this problem.
Since the Mahalanobis $K$-NN classifier  
is parameterized by a positive semi-definite matrix $\mM$
and the training set $\sS$,
we further let the optimal solution $\vdelta_\sS^*(\vx, y; \mM)$ and
the optimal value $\epsilon_\sS^*(\vx, y; \mM)$
explicitly indicate their dependence on $\mM$ and $\sS$.
In this paper we will consider $\ell_2$  norm in \eqref{eq:minimal_perturbation} for simplicity.

\paragraph{Certified and empirical robust errors}
Let $\underline{\epsilon}^*(\vx, y)$ be a lower bound
of the norm of the minimal adversarial perturbation $\epsilon^*(\vx, y)$,
possibly computed by a robustness verification algorithm.
For a distribution $\mathcal{D}$ over $\sX \times \sY$,
the {\bf certified robust error} with respect to the given radius $\epsilon \geq 0 $
is defined as the probability that
$\underline{\epsilon}^*(\vx, y)$ is not greater than $\epsilon$, namely
\begin{align} \label{eq:cre}
  \operatorname{cre}(\epsilon) = \E_{(\vx, y) \sim \mathcal{D}} [ \bm{1} \{\underline{\epsilon}^*(\vx, y) \leq \epsilon\} ].
\end{align}
Note that in the case with $\underline{\epsilon}^*(\vx, y) = \epsilon^*(\vx, y)$,
the certified robust error at $\epsilon=0$
is reduced to the \emph{clean error} (the normal classification error).
In this paper we will investigate
how to compute the certified robust error for Mahalanobis $K$-NN classifiers.


On the other hand, adversarial attack algorithms are trying
to find a feasible solution of \eqref{eq:minimal_perturbation},
denoted as $\hat{\vdelta}(\vx, y)$,
which will give an upper bound,
i.e., $\lVert \hat{\vdelta}(\vx, y) \rVert \geq\epsilon^*(\vx, y)$.
Based on the upper bound, we can measure the {\bf empirical robust error}  of a model by
\begin{align}
  \operatorname{ere}(\epsilon) = \E_{(\vx, y) \sim \mathcal{D}} [ \bm{1} \{ \lVert \hat{\vdelta}(\vx, y) \rVert\leq \epsilon\} ].
\end{align}
Since $\hat{\vdelta}(\vx, y)$ is computed by an attack method,  
the empirical robust error is also called the \emph{attack error} or the \emph{attack success rate}.
A family of decision-based attack methods,
which view the victim model as a black-box,
can be used to attack Mahalanobis $K$-NN
classifiers~\cite{brendel2018decision,cheng2019query,cheng2020signopt}.



\section{Adversarially robust metric learning}

The objective of \emph{adversarially robust metric learning} (ARML) is to learn the matrix $\mM$
via the training data $\sS$
such that the resulting Mahalanobis $K$-NN classifier
has small certified and empirical robust errors.

\subsection{Basic formulation}
The goal is to learn a positive semi-definite matrix $\mM$ to minimize the certified robust training error.
Since the certified robust error defined in \eqref{eq:cre} is non-smooth,
we replace the indicator function by a loss function. The resulting objective can be formulated as
\begin{align} \label{opt:framework}
  \begin{aligned}
    \min_{\mG \in \R^{D \times D}} \ \
    \frac{1}{N} \sum_{i=1}^N
    \ell \left(
    \epsilon_{\sS - \{(\vx_i, y_i)\}}^*(\vx_i, y_i; \mM)
    \right)
    \ \
    \text{s.t.} \
    \mM = \mG^\top \mG,
  \end{aligned}
\end{align}
where $\ell: \R \rightarrow \R$ is a monotonically non-increasing function,
e.g., the hinge loss $[1 - \epsilon]_+$,
exponential loss $\exp(-\epsilon)$, logistic loss $\log(1 + \exp(-\epsilon))$, or
``negative'' loss $-\epsilon$.
We also employ the matrix $\mG$ to enforce $\mM$ to be positive semi-definite,
and it is possible to derive a low-rank $\mM$ by constraining the shape of $\mG$.
Note that the minimal adversarial perturbation is defined on the training set excluding $(\vx_i, y_i)$ itself,
since otherwise a 1-nearest neighbor classifier with any distance measurement will have 100\% accuracy.
In this way, we minimize the ``leave-one-out'' certified robust error.
The remaining problem is how to exactly compute or approximate $\epsilon^*_{\sS}(\vx, y; \mM)$ in our training objective.

\subsection{Bounding minimal adversarial perturbation for Mahalanobis \texorpdfstring{$K$}{K}-NN}

For convenience, suppose $K$ is an odd number and denote $k = (K+1)/2$.
In the binary classification case for simplicity, i.e., $C = 2$,
the computation of  $\epsilon^*_{\sS}(\vx_\text{test}, y_\text{test}; \mM)$ for
Mahalanobis $K$-NN
could be formulated as
\begin{align} \label{opt:knn-verification}
  \begin{aligned}
    \min_{\substack{\sJ \subseteq \{j: y_j \neq y_\text{test}\}, \lvert \sJ \rvert = k                               \\
        \sI \subseteq \{i:y_i = y_\text{test}\}, \lvert\sI\rvert=k-1}} \min_{\vdelta_{\sI,\sJ}}\ \
     & \lVert \vdelta_{\sI,\sJ} \rVert                                                                               \\
    \text{s.t.} \ \
     & d_{\mM}(\vx_\text{test} + \vdelta_{\sI,\sJ}, \vx_j) \leq d_{\mM}(\vx_\text{test} + \vdelta_{\sI,\sJ}, \vx_i), \\
     & \forall j\in \sJ,\ \forall i \in \{i:y_i = y_\text{test}\} - \sI.                                             \\
  \end{aligned}
\end{align}
This minimization formulation enumerates all the $K$-size nearest neighbor set
containing at most $k-1$ instances in the same class with the test instance,
computes the minimal perturbation resulting in each $K$-nearest neighbor set, and takes the minimum of them.






Obviously, solving \eqref{opt:knn-verification} exactly
(enumerating all $(\sI, \sJ)$ pairs) has time complexity
growing exponentially with $K$,
and furthermore, a numerical solution cannot be incorporated into the training objective \eqref{opt:framework} since we need to write $\epsilon^*$ as a function of $\mM$ for back-propagation.
%
To address these issues, we  resort to a lower bound
of the optimal value of \eqref{opt:knn-verification} rather than solving it exactly.

First,  we consider a simple  triplet problem: given vectors $\vx^+, \vx^-, \vx \in \R^D$
and a positive semi-definite matrix $\mM \in \R^{D \times D}$,
find the minimum perturbation $\vdelta \in \R^D$ on $\vx$ such that
$d_\mM(\vx + \vdelta, \vx^-) \leq d_\mM(\vx + \vdelta, \vx^+)$ holds.
It could be formulated as the following optimization problem
\begin{align} \label{opt:triplet}
  \begin{aligned}
    \min_\vdelta \ \
    \lVert \vdelta \rVert                                         \ \
    \text{s.t.} \
    d_\mM(\vx + \vdelta, \vx^-) \leq d_\mM(\vx + \vdelta, \vx^+).
  \end{aligned}
\end{align}
Note that the constraint in \eqref{opt:triplet} can be written as a linear form,
so this is a convex quadratic programming problem with a linear constraint.
We show that the optimal value of \eqref{opt:triplet} can be expressed in closed form:
\begin{align}
  \frac{
    \left[ d_{\mM}(\vx, \vx^-) - d_{\mM}(\vx, \vx^+) \right]_+
  }{2 \sqrt{(\vx^+ - \vx^-)^\top \mM^\top \mM (\vx^+ - \vx^-)}},
\end{align}
where $[\cdot]$ denotes $\max(\cdot, 0)$.
The derivation for the optimal value is deferred to Appendix~\ref{app:triplet}.
Note that if $\mM$ is the identity matrix and $d_{\mM}(\vx, \vx^-) > d_{\mM}(\vx, \vx^+)$ strictly holds,
the optimal value has a clear geometric meaning:
it is the Euclidean distance from $\vx$
to the bisection between $\vx^+$ and $\vx^-$.

For convenience, we define the function
$\tilde{\epsilon}: \R^D \times \R^D \times \R^D \rightarrow \R$ as
\begin{align}
  \tilde{\epsilon} (\vx^+, \vx^-, \vx; \mM) = \frac{
    d_{\mM}(\vx, \vx^-) - d_{\mM}(\vx, \vx^+)
  }{2 \sqrt{(\vx^+ - \vx^-)^\top \mM^\top \mM (\vx^+ - \vx^-)}}.
\end{align}

Then we could relax \eqref{opt:knn-verification} further
and have the following theorem:
\begin{theorem}[Robustness verification for Mahalanobis $K$-NN] \label{thm:knn-lower}
  Given a Mahalanobis $K$-NN classifier
  parameterized by a neighbor parameter $K$,
  a training dataset $\sS$ and a positive semi-definite matrix $\mM$,
  for any  instance $(\vx_\text{test}, y_\text{test})$ we have
  \begin{align}\label{eq:lower}
    \epsilon^*(\vx_\text{test}, y_\text{test}; \mM) \geq
    \kthmin_{j: y_j \neq y_\text{test}} \
    \kthmax_{i: y_i = y_\text{test}} \
    \tilde{\epsilon} (\vx_i, \vx_j, \vx_\text{test}; \mM),
  \end{align}
  where $\kthmax$ and $\kthmin$ select the $k$-th maximum and $k$-th minimum respectively with $k = (K+1)/2$.
\end{theorem}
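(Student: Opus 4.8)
The plan is to derive \eqref{eq:lower} directly from the exact combinatorial formulation \eqref{opt:knn-verification}, relaxing each of its constraints to a triplet problem \eqref{opt:triplet} whose optimal value is already known in closed form. First I would fix an arbitrary feasible triple $(\sI, \sJ, \vdelta_{\sI,\sJ})$ of \eqref{opt:knn-verification} and establish a lower bound on $\lVert \vdelta_{\sI,\sJ}\rVert$ expressed purely through $\tilde{\epsilon}$. Since $\epsilon^*(\vx_\text{test}, y_\text{test}; \mM)$ is the minimum of $\lVert \vdelta_{\sI,\sJ}\rVert$ over all feasible triples, a bound that is uniform over the triple will, after the appropriate minimization, land exactly on the $\kthmin$--$\kthmax$ expression on the right.

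The core observation is that every individual constraint of \eqref{opt:knn-verification} is precisely a constraint of the triplet problem \eqref{opt:triplet}. Concretely, for any $j \in \sJ$ and any same-class index $i \in \{i : y_i = y_\text{test}\} - \sI$, the feasible $\vdelta_{\sI,\sJ}$ satisfies $d_\mM(\vx_\text{test} + \vdelta_{\sI,\sJ}, \vx_j) \leq d_\mM(\vx_\text{test} + \vdelta_{\sI,\sJ}, \vx_i)$, so $\vdelta_{\sI,\sJ}$ is feasible for \eqref{opt:triplet} with $\vx^+ = \vx_i$ and $\vx^- = \vx_j$. The closed-form optimal value of the triplet problem then forces $\lVert \vdelta_{\sI,\sJ}\rVert \geq [\tilde{\epsilon}(\vx_i, \vx_j, \vx_\text{test}; \mM)]_+ \geq \tilde{\epsilon}(\vx_i, \vx_j, \vx_\text{test}; \mM)$, where dropping the positive part only weakens the inequality and hence preserves a valid lower bound.

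Next I would convert these pointwise inequalities into the two order statistics. Fixing $j \in \sJ$ and maximizing over the admissible same-class indices, I need the elementary fact that deleting the $k-1$ indices of $\sI$ removes at most the $k-1$ largest values of $\tilde{\epsilon}(\,\cdot\,, \vx_j, \vx_\text{test}; \mM)$, so the maximum over the survivors is at least the $k$-th largest over all same-class indices; this yields $\lVert \vdelta_{\sI,\sJ}\rVert \geq \kthmax_{i : y_i = y_\text{test}} \tilde{\epsilon}(\vx_i, \vx_j, \vx_\text{test}; \mM)$, a bound independent of the particular $\sI$. Because this holds for each $j \in \sJ$, it holds for the maximum over $j \in \sJ$; and minimizing $\max_{j \in \sJ}(\cdot)$ over all $k$-subsets $\sJ$ of the opposite-class indices selects the $k$ smallest such quantities, i.e.\ equals $\kthmin_{j : y_j \neq y_\text{test}}(\cdot)$. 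Chaining these and taking the minimum over all feasible triples delivers \eqref{eq:lower}.

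I expect the main obstacle to be the two order-statistic steps rather than the analytic triplet step, which is already supplied by the closed form. In particular I would carefully state and prove the dual elementary lemmas, that removing any $k-1$ of a finite list of reals leaves a maximum at least the $k$-th largest of the original, and that the minimax over $k$-subsets equals the $k$-th smallest value, taking care with ties and confirming that the per-$j$ bound is uniform in $\sI$ so that the inner minimization over $\sI$ in \eqref{opt:knn-verification} cannot spoil it. The degenerate case where $(\vx_\text{test}, y_\text{test})$ is already misclassified, i.e.\ $\epsilon^* = 0$, is then handled automatically, since $\vdelta = \vzero$ is feasible for some $(\sI, \sJ)$ and forces the right-hand side of \eqref{eq:lower} to be nonpositive.
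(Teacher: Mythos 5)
Your proposal is correct and takes essentially the same route as the paper's proof: relax each constraint of \eqref{opt:knn-verification} to the triplet problem \eqref{opt:triplet}, whose closed-form optimum yields the pointwise bound by $\tilde{\epsilon}$, and then apply the two order-statistic facts (deleting the $k-1$ indices of $\sI$ leaves a maximum at least the $k$-th largest, and the minimax over $k$-subsets $\sJ$ equals the $k$-th smallest) to remove the dependence on $(\sI,\sJ)$. The only cosmetic difference is that you argue from an arbitrary feasible $\vdelta_{\sI,\sJ}$ while the paper argues from the optimal value $\epsilon^{(\sI,\sJ)}$ of each inner problem, which is the same argument.
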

The proof is deferred to Appendix~\ref{app:knn}.
In this way, we only need to compute $\tilde{\epsilon}(\vx_i, \vx_j, \vx_\text{test})$
for each $i$ and $j$ in order to derive a lower bound of the minimal adversarial perturbation of Mahalanobis $K$-NN.
It leads to an efficient algorithm to verify the robustness of Mahalanobis $K$-NN.
The time complexity is $O(N^2)$ and independent of $K$.
Note that any subset of $\{i: y_i = y_\text{test}\}$ also leads to
a feasible lower bound of the minimal adversarial perturbation
and could improve computational efficiency,
but the resulting lower bound is not necessarily as tight as \eqref{eq:lower}.
Therefore, in the experimental section,
to evaluate certified robust errors as accurately as possible,
we do not employ this strategy.

In the general multi-class case,
the constraint of \eqref{opt:knn-verification} is the necessary condition for successful attacks, rather than the necessary and sufficient condition.
As a result, the optimal value of \eqref{opt:knn-verification} is a lower bound of the minimal adversarial perturbation.
Therefore, Theorem~\ref{thm:knn-lower} also holds for the multi-class case.
Based on this lower bound of $\epsilon^*$, we will derive the proposed ARML algorithm.


\subsection{Training algorithm of adversarially robust metric learning}

By replacing the $\epsilon^*$ in \eqref{opt:framework} with the lower bound derived in Theorem \ref{thm:knn-lower},
we get a trainable objective function for adversarially robust metric learning:
\begin{align} \label{opt:prototype}
  \begin{aligned}
    \min_{\mG \in \R^{D \times D}} \ \
    \frac{1}{N} \sum_{t=1}^N
    \ell \left(
    \kthmin_{j: y_j \neq y_t}
    \kthmax_{i: i \neq t, y_i = y_t}
    \tilde{\epsilon} (\vx_i, \vx_j, \vx_t; \mM)
    \right)
    \ \
    \text{s.t.} \
    \mM = \mG^\top \mG.
  \end{aligned}
\end{align}

Although \eqref{opt:prototype} is trainable since $\tilde{\epsilon}$ is a function of $\mM$,
for large datasets
it is time-consuming to run the inner min-max procedure.
Furthermore, since what we really care is the generalization performance of the learned metric
instead of the leave-one-out robust training error,
it is unnecessary to compute the exact solution.
%
Therefore, instead of computing the $\kthmax$ and $\kthmin$ exactly,
we propose to sample positive and negative instances from the neighborhood of each training instance, 
which leads to the following formulation:
\begin{align}
  \label{opt:efficient}
  \begin{aligned}
    \min_{\mG \in \R^{D \times D}} \ \
    \frac{1}{N} \sum_{i=1}^N
    \ell \left(
    \tilde{\epsilon}
    \left(\operatorname{randnear}_\mM^+(\vx_i), \operatorname{randnear}_\mM^-(\vx_i), \vx_i; \mM
    \right)
    \right)
    \ \
    \text{s.t.} \
    \mM = \mG^\top \mG,
  \end{aligned}
\end{align}
where $\operatorname{randnear}_\mM^+(\cdot)$
denotes a sampling procedure for an instance in the same class within $\vx_i$'s neighborhood,
and $\operatorname{randnear}_\mM^-(\cdot)$ denotes a sampling procedure for an instance in a different class,
also within $\vx_i$'s neighborhood,
and the distances are measured by the Mahalanobis distance  $d_\mM$.
In our implementation, we sample instances from a fixed number of nearest instances.
As a result, the optimization formulation \eqref{opt:efficient} approximately
minimizes the  certified robust error and improves computational efficient significantly. 

Our \emph{adversarially robust metric learning} (ARML) algorithm is shown in Algorithm~\ref{alg:arml}.
At every iteration, $\mG$ is updated with the gradient of the loss function,
while the calculations of $\operatorname{randnear}_\mM^+(\cdot)$ and $\operatorname{randnear}_\mM^-(\cdot)$
do not contribute to the gradient for the sake of efficient and stable computation.

\begin{algorithm}[ht]
  \SetAlgoLined
  \KwIn{Training data $\sS$, number of epochs $T$.}
  \KwOut{Positive semi-definite matrix $\mM$.}
  Initialize $\mG$ and $\mM$ as identity matrices \;
  \For{$t=0$ $\ldots$ $T-1$}{
    Update $\mG$ with the gradient $\E_{(\vx, y) \in \sS} \nabla_\mG \ell \left(
      \tilde{\epsilon}
      \left(\operatorname{randnear}_\mM^+(\vx), \operatorname{randnear}_\mM^-(\vx), \vx; \mG^\top \mG
      \right)
      \right)$\;
    Update $\mM$ with the constraint $\mM = \mG^\top \mG$\;
  }
  \caption{Adversarially robust metric learning (ARML)}
  \label{alg:arml}
\end{algorithm}

\subsection{Exact minimal adversarial perturbation of Mahalanobis 1-NN}

In the special Mahalanobis 1-NN case,
we will show a method to compute the exact minimal adversarial perturbation in a similar formulation to \eqref{opt:knn-verification}.
However, this algorithm can only compute a numerical value of the minimal adversarial perturbation $\vdelta^*$,
so it cannot be used in training time.
We will use this method to evaluate the robust error for the Mahalanobis 1-NN case in the experiments.

Computing the minimal adversarial perturbation $\epsilon^*_{\sS}(\vx_\text{test}, y_\text{test}; \mM)$
for Mahalanobis 1-NN classifier can be formulated as the following optimization problem:
\begin{align} \label{opt:exact}
  \begin{aligned}
    \min_{j: y_j \neq y_\text{test}} \min_{\vdelta_j}\ \
    \lVert \vdelta_j \rVert                                                                       \ \
    \text{s.t.} \
    d_{\mM}(\vx_\text{test} + \vdelta_j, \vx_j) \leq d_{\mM}(\vx_\text{test} + \vdelta_j, \vx_i),
    \ \forall i: y_i = y_\text{test}.
  \end{aligned}
\end{align}
Interestingly and not surprisingly, it is a special case of \eqref{opt:knn-verification}
where we have $K=1$ and $k = (K+1)/2 = 1$,
and hence $\sI$ is an empty set, and $\sJ$ has only one element.
The formulation of \eqref{opt:exact} is equivalent to
considering each $\vx_j$ in a different class from $y_\text{test}$
and computing the minimum perturbation needed for making  $\vx_{\text{test}}$ closer to $\vx_j$
than all the training instances in the same class with $y_\text{test}$, i.e., $\vx_i$s,.
It is noteworthy that the constraint of \eqref{opt:exact} could be equivalently written as
\begin{align}
  (\vx_i - \vx_j)^\top \mM \vdelta \leq \frac{1}{2}
  \left(
  d_\mM(\vx_\text{test}, \vx_i) - d_\mM(\vx_\text{test}, \vx_j)
  \right), \ \forall i: y_i = y_\text{test},
\end{align}
which are all affine functions.
Therefore, the inner minimization is
a convex quadratic programming problem and could be solved in polynomial time~\cite{kozlov1980polynomial}.
As a result, it leads to a naive polynomial-time algorithm
for finding the minimal adversarial perturbation of Mahalanobis 1-NN:
solve all the inner convex quadratic programming problems
and then select the minimum of them.

Instead, we propose a much more efficient method to solve \eqref{opt:exact}.
The main idea is to compute a lower bound for each inner minimization problem first,
and with these lower bounds, we could screen most of the inner minimization problems safely
without the need of solving them exactly.
This method is an extension of our previous work \cite{wang2019evaluating},
where only the Euclidean distance is taken into consideration.
See Algorithm~\ref{alg:exact} in Appendix~\ref{app:1nn} for details
and this algorithm is used for computing certified robust errors of Mahalanobis 1-NN in the experimental section.



\section{Experiments}



We compare the proposed ARML (Adversarial Robust Metric Learning) method with the following baselines:
\begin{compactitem}
  \item Euclidean: uses the Euclidean distance directly without learning any metric;
  \item Neighbourhood components analysis (NCA)~\cite{goldberger2004neighbourhood}:
  maximizes a stochastic variant of the leave-one-out nearest neighbors score on the training set.
  \item Large margin nearest neighbor (LMNN)~\cite{weinberger2009distance}:
  keeps close nearest neighbors from the same class,
  while keeps instances from different classes separated by a large margin.
  \item Information Theoretic Metric Learning (ITML)~\cite{davis2007information}:
  minimizes the log-determinant divergence with similarity and dissimilarity constraints.
  \item Local Fisher Discriminant Analysis (LFDA)~\cite{sugiyama2007dimensionality}:
  a modified version of linear discriminant analysis by rewriting scatter matrices in a pairwise manner.
\end{compactitem}
For evaluation, we use six public datasets on which metric learning methods perform favorably in terms of clean errors,
including four small or medium-sized datasets~\cite{chang2011libsvm}: Splice, Pendigits, Satimage and USPS,
and two image datasets MNIST~\cite{lecun1998gradient} and Fashion-MNIST~\cite{xiao2017fashion},
which are wildly used for robustness verification for neural networks.
For the proposed method, we use the same hyperparameters for all the datasets
(see Appendix~\ref{app:exp} for the dataset statistics,
more details of the experimental setting,
and hyperparameter sensitivity analysis).

\begin{table}[tb]
  \caption{Certified robust errors of Mahalanobis 1-NN.
    The best (minimum) certified robust errors among all methods are in bold.
    Note that the certified robust errors of 1-NN are also
    the optimal empirical robust errors (attack errors),
    and these robust errors at the radius 0 are also the clean errors.}
  \label{tab:1nn}
  \centering

  \resizebox{.7\linewidth}{!}{
    \begin{tabular}{@{}llllllll@{}}
      \toprule
      \multirow{7}{*}{MNIST}         & $\ell_2$-radius & 0.000          & 0.500          & 1.000          & 1.500          & 2.000          & 2.500          \\ \cmidrule(l){2-8}
                                     & Euclidean       & 0.033          & 0.112          & 0.274          & 0.521          & 0.788          & 0.945          \\
                                     & NCA             & 0.025          & 0.140          & 0.452          & 0.839          & 0.977          & 1.000          \\
                                     & LMNN            & 0.032          & 0.641          & 0.999          & 1.000          & 1.000          & 1.000          \\
                                     & ITML            & 0.073          & 0.571          & 0.928          & 1.000          & 1.000          & 1.000          \\
                                     & LFDA            & 0.152          & 1.000          & 1.000          & 1.000          & 1.000          & 1.000          \\
                                     & ARML (Ours)     & \textbf{0.024} & \textbf{0.089} & \textbf{0.222} & \textbf{0.455} & \textbf{0.757} & \textbf{0.924} \\ \midrule
      \multirow{7}{*}{Fashion-MNIST} & $\ell_2$-radius & 0.000          & 0.500          & 1.000          & 1.500          & 2.000          & 2.500          \\ \cmidrule(l){2-8}
                                     & Euclidean       & 0.145          & 0.381          & 0.606          & 0.790          & 0.879          & 0.943          \\
                                     & NCA             & \textbf{0.116} & 0.538          & 0.834          & 0.950          & 0.998          & 1.000          \\
                                     & LMNN            & 0.142          & 0.756          & 0.991          & 1.000          & 1.000          & 1.000          \\
                                     & ITML            & 0.163          & 0.672          & 0.929          & 0.998          & 1.000          & 1.000          \\
                                     & LFDA            & 0.211          & 1.000          & 1.000          & 1.000          & 1.000          & 1.000          \\
                                     & ARML (Ours)     & 0.127          & \textbf{0.348} & \textbf{0.568} & \textbf{0.763} & \textbf{0.859} & \textbf{0.928} \\ \midrule
      \multirow{7}{*}{Splice}        & $\ell_2$-radius & 0.000          & 0.100          & 0.200          & 0.300          & 0.400          & 0.500          \\ \cmidrule(l){2-8}
                                     & Euclidean       & 0.320          & 0.513          & 0.677          & 0.800          & 0.854          & 0.880          \\
                                     & NCA             & \textbf{0.130} & 0.252          & 0.404          & 0.584          & 0.733          & 0.836          \\
                                     & LMNN            & 0.190          & 0.345          & 0.533          & 0.697          & 0.814          & 0.874          \\
                                     & ITML            & 0.306          & 0.488          & 0.679          & 0.809          & 0.862          & 0.882          \\
                                     & LFDA            & 0.264          & 0.434          & 0.605          & 0.760          & 0.845          & 0.872          \\
                                     & ARML (Ours)     & \textbf{0.130} & \textbf{0.233} & \textbf{0.370} & \textbf{0.526} & \textbf{0.652} & \textbf{0.758} \\ \midrule
      \multirow{7}{*}{Pendigits}     & $\ell_2$-radius & 0.000          & 0.100          & 0.200          & 0.300          & 0.400          & 0.500          \\ \cmidrule(l){2-8}
                                     & Euclidean       & 0.032          & 0.119          & 0.347          & 0.606          & 0.829          & 0.969          \\
                                     & NCA             & 0.034          & 0.202          & 0.586          & 0.911          & 0.997          & 1.000          \\
                                     & LMNN            & 0.029          & 0.183          & 0.570          & 0.912          & 0.995          & 0.999          \\
                                     & ITML            & 0.049          & 0.308          & 0.794          & 0.991          & 1.000          & 1.000          \\
                                     & LFDA            & 0.042          & 0.236          & 0.603          & 0.912          & 0.998          & 1.000          \\
                                     & ARML (Ours)     & \textbf{0.028} & \textbf{0.115} & \textbf{0.344} & \textbf{0.598} & \textbf{0.823} & \textbf{0.967} \\ \midrule
      \multirow{7}{*}{Satimage}      & $\ell_2$-radius & 0.000          & 0.150          & 0.300          & 0.450          & 0.600          & 0.750          \\ \cmidrule(l){2-8}
                                     & Euclidean       & 0.108          & 0.642          & 0.864          & 0.905          & 0.928          & 0.951          \\
                                     & NCA             & 0.103          & 0.710          & 0.885          & 0.915          & 0.940          & 0.963          \\
                                     & LMNN            & \textbf{0.092} & 0.665          & 0.871          & 0.912          & 0.944          & 0.969          \\
                                     & ITML            & 0.127          & 0.807          & 0.979          & 1.000          & 1.000          & 1.000          \\
                                     & LFDA            & 0.125          & 0.836          & 0.919          & 0.956          & 0.992          & 1.000          \\
                                     & ARML (Ours)     & 0.095          & \textbf{0.605} & \textbf{0.839} & \textbf{0.899} & \textbf{0.920} & \textbf{0.946} \\ \midrule
      \multirow{7}{*}{USPS}          & $\ell_2$-radius & 0.000          & 0.500          & 1.000          & 1.500          & 2.000          & 2.500          \\ \cmidrule(l){2-8}
                                     & Euclidean       & 0.045          & 0.224          & 0.585          & 0.864          & \textbf{0.970} & \textbf{0.999} \\
                                     & NCA             & 0.056          & 0.384          & 0.888          & 0.987          & 1.000          & 1.000          \\
                                     & LMNN            & 0.046          & 0.825          & 1.000          & 1.000          & 1.000          & 1.000          \\
                                     & ITML            & 0.060          & 0.720          & 0.999          & 1.000          & 1.000          & 1.000          \\
                                     & LFDA            & 0.098          & 1.000          & 1.000          & 1.000          & 1.000          & 1.000          \\
                                     & ARML (Ours)     & \textbf{0.043} & \textbf{0.204} & \textbf{0.565} & \textbf{0.857} & \textbf{0.970} & \textbf{0.999} \\ \bottomrule
    \end{tabular}
  }
\end{table}

\begin{table}[tb]
  \caption{Certified robust errors (left) and empirical robust errors (right) of Mahalanobis $K$-NN.
    The best (minimum) robust errors among all methods are in bold.
    The empirical robust errors at the radius 0 are 
    also the clean errors.
  }
  \label{tab:knn}
  \centering
  \resizebox{\linewidth}{!}{
    \begin{tabular}{@{}ll|llllll|llllll@{}}
      \toprule
                                     &                 & \multicolumn{6}{c|}{Certified robust errors} & \multicolumn{6}{c}{Empirical robust errors}                                                                                                                                                                           \\ \midrule
      \multirow{7}{*}{MNIST}         & $\ell_2$-radius & 0.000                                        & 0.500                                       & 1.000          & 1.500          & 2.000          & 2.500          & 0.000          & 0.500          & 1.000          & 1.500          & 2.000          & 2.500          \\ \cmidrule(l){2-14}
                                     & Euclidean       & 0.038                                        & 0.134                                       & 0.360          & 0.618          & 0.814          & 0.975          & 0.031          & 0.063          & 0.104          & 0.155          & 0.204          & 0.262          \\
                                     & NCA             & \textbf{0.030}                               & 0.175                                       & 0.528          & 0.870          & 0.986          & 1.000          & \textbf{0.027} & 0.063          & 0.120          & 0.216          & 0.330          & 0.535          \\
                                     & LMNN            & 0.040                                        & 0.669                                       & 1.000          & 1.000          & 1.000          & 1.000          & 0.036          & 0.121          & 0.336          & 0.775          & 0.972          & 1.000          \\
                                     & ITML            & 0.106                                        & 0.731                                       & 0.943          & 1.000          & 1.000          & 1.000          & 0.084          & 0.218          & 0.355          & 0.510          & 0.669          & 0.844          \\
                                     & LFDA            & 0.237                                        & 1.000                                       & 1.000          & 1.000          & 1.000          & 1.000          & 0.215          & 1.000          & 1.000          & 1.000          & 1.000          & 1.000          \\
                                     & ARML (Ours)     & 0.034                                        & \textbf{0.101}                              & \textbf{0.276} & \textbf{0.537} & \textbf{0.760} & \textbf{0.951} & 0.032          & \textbf{0.055} & \textbf{0.077} & \textbf{0.109} & \textbf{0.160} & \textbf{0.213} \\ \midrule
      \multirow{7}{*}{Fashion-MNIST} & $\ell_2$-radius & 0.000                                        & 0.500                                       & 1.000          & 1.500          & 2.000          & 2.500          & 0.000          & 0.500          & 1.000          & 1.500          & 2.000          & 2.500          \\ \cmidrule(l){2-14}
                                     & Euclidean       & 0.160                                        & 0.420                                       & 0.650          & 0.800          & 0.895          & 0.946          & 0.143          & 0.227          & 0.298          & 0.360          & 0.420          & 0.489          \\
                                     & NCA             & \textbf{0.144}                               & 0.557                                       & 0.832          & 0.946          & 1.000          & 1.000          & \textbf{0.121} & 0.232          & 0.343          & 0.483          & 0.624          & 0.780          \\
                                     & LMNN            & 0.158                                        & 0.792                                       & 0.991          & 1.000          & 1.000          & 1.000          & 0.140          & 0.364          & 0.572          & 0.846          & 0.983          & 0.999          \\
                                     & ITML            & 0.236                                        & 0.784                                       & 0.949          & 1.000          & 1.000          & 1.000          & 0.209          & 0.460          & 0.692          & 0.892          & 0.978          & 1.000          \\
                                     & LFDA            & 0.291                                        & 1.000                                       & 1.000          & 1.000          & 1.000          & 1.000          & 0.263          & 0.870          & 0.951          & 0.975          & 0.988          & 0.995          \\
                                     & ARML (Ours)     & 0.152                                        & \textbf{0.371}                              & \textbf{0.589} & \textbf{0.755} & \textbf{0.856} & \textbf{0.924} & 0.134          & \textbf{0.202} & \textbf{0.274} & \textbf{0.344} & \textbf{0.403} & \textbf{0.487} \\ \midrule
      \multirow{7}{*}{Splice}        & $\ell_2$-radius & 0.000                                        & 0.100                                       & 0.200          & 0.300          & 0.400          & 0.500          & 0.000          & 0.100          & 0.200          & 0.300          & 0.400          & 0.500          \\ \cmidrule(l){2-14}
                                     & Euclidean       & 0.333                                        & 0.558                                       & 0.826          & 0.965          & 0.988          & 0.996          & 0.306          & 0.431          & 0.526          & 0.608          & 0.676          & 0.743          \\
                                     & NCA             & \textbf{0.103}                               & \textbf{0.209}                              & 0.415          & 0.659          & 0.824          & 0.921          & \textbf{0.103} & \textbf{0.173} & 0.274          & 0.414          & 0.570          & 0.684          \\
                                     & LMNN            & 0.149                                        & 0.332                                       & 0.630          & 0.851          & 0.969          & 0.994          & 0.149          & 0.241          & 0.357          & 0.492          & 0.621          & 0.722          \\
                                     & ITML            & 0.279                                        & 0.571                                       & 0.843          & 0.974          & 0.995          & 0.997          & 0.279          & 0.423          & 0.525          & 0.603          & 0.675          & 0.751          \\
                                     & LFDA            & 0.242                                        & 0.471                                       & 0.705          & 0.906          & 0.987          & 0.997          & 0.242          & 0.371          & 0.466          & 0.553          & 0.637          & 0.737          \\
                                     & ARML (Ours)     & 0.128                                        & 0.221                                       & \textbf{0.345} & \textbf{0.509} & \textbf{0.666} & \textbf{0.819} & 0.128          & 0.196          & \textbf{0.273} & \textbf{0.380} & \textbf{0.497} & \textbf{0.639} \\ \midrule
      \multirow{7}{*}{Pendigits}     & $\ell_2$-radius & 0.000                                        & 0.100                                       & 0.200          & 0.300          & 0.400          & 0.500          & 0.000          & 0.100          & 0.200          & 0.300          & 0.400          & 0.500          \\ \cmidrule(l){2-14}
                                     & Euclidean       & 0.039                                        & 0.126                                       & 0.316          & 0.577          & 0.784          & 0.937          & 0.036          & 0.085          & 0.155          & 0.248          & 0.371          & 0.528          \\
                                     & NCA             & 0.038                                        & 0.196                                       & 0.607          & 0.884          & 0.997          & 1.000          & 0.038          & 0.103          & 0.246          & 0.428          & 0.637          & 0.804          \\
                                     & LMNN            & 0.034                                        & 0.180                                       & 0.568          & 0.898          & 0.993          & 0.999          & \textbf{0.030} & 0.096          & 0.246          & 0.462          & 0.681          & 0.862          \\
                                     & ITML            & 0.060                                        & 0.334                                       & 0.773          & 0.987          & 1.000          & 1.000          & 0.060          & 0.149          & 0.343          & 0.616          & 0.814          & 0.926          \\
                                     & LFDA            & 0.047                                        & 0.228                                       & 0.595          & 0.904          & 1.000          & 1.000          & 0.043          & 0.104          & 0.248          & 0.490          & 0.705          & 0.842          \\
                                     & ARML (Ours)     & \textbf{0.035}                               & \textbf{0.114}                              & \textbf{0.308} & \textbf{0.568} & \textbf{0.780} & \textbf{0.937} & 0.034          & \textbf{0.078} & \textbf{0.138} & \textbf{0.235} & \textbf{0.368} & \textbf{0.516} \\ \midrule
      \multirow{7}{*}{Satimage}      & $\ell_2$-radius & 0.000                                        & 0.150                                       & 0.300          & 0.450          & 0.600          & 0.750          & 0.000          & 0.150          & 0.300          & 0.450          & 0.600          & 0.750          \\ \cmidrule(l){2-14}
                                     & Euclidean       & \textbf{0.101}                               & 0.579                                       & 0.842          & 0.899          & 0.927          & 0.948          & 0.091          & 0.237          & 0.482          & 0.682          & \textbf{0.816} & 0.897          \\
                                     & NCA             & 0.117                                        & 0.670                                       & 0.886          & 0.915          & 0.936          & 0.961          & 0.101          & 0.297          & 0.564          & 0.746          & 0.876          & 0.931          \\
                                     & LMNN            & 0.105                                        & 0.613                                       & 0.855          & 0.914          & 0.944          & 0.961          & \textbf{0.090} & 0.269          & 0.548          & 0.737          & 0.855          & 0.910          \\
                                     & ITML            & 0.130                                        & 0.768                                       & 0.959          & 1.000          & 1.000          & 1.000          & 0.109          & 0.411          & 0.757          & 0.939          & 0.990          & 1.000          \\
                                     & LFDA            & 0.128                                        & 0.779                                       & 0.904          & 0.958          & 0.995          & 1.000          & 0.112          & 0.389          & 0.673          & 0.860          & 0.950          & 0.986          \\
                                     & ARML (Ours)     & 0.103                                        & \textbf{0.540}                              & \textbf{0.824} & \textbf{0.898} & \textbf{0.920} & \textbf{0.943} & 0.092          & \textbf{0.228} & \textbf{0.464} & \textbf{0.668} & 0.817          & \textbf{0.896} \\ \midrule
      \multirow{7}{*}{USPS}          & $\ell_2$-radius & 0.000                                        & 0.500                                       & 1.000          & 1.500          & 2.000          & 2.500          & 0.000          & 0.500          & 1.000          & 1.500          & 2.000          & 2.500          \\ \cmidrule(l){2-14}
                                     & Euclidean       & 0.063                                        & 0.239                                       & 0.586          & 0.888          & 0.977          & 1.000          & 0.058          & 0.125          & 0.211          & 0.365          & 0.612          & \textbf{0.751} \\
                                     & NCA             & 0.072                                        & 0.367                                       & 0.903          & 0.986          & 1.000          & 1.000          & 0.063          & 0.158          & 0.365          & 0.686          & 0.899          & 0.980          \\
                                     & LMNN            & 0.062                                        & 0.856                                       & 1.000          & 1.000          & 1.000          & 1.000          & 0.055          & 0.359          & 0.890          & 0.999          & 1.000          & 1.000          \\
                                     & ITML            & 0.082                                        & 0.696                                       & 0.999          & 1.000          & 1.000          & 1.000          & 0.072          & 0.273          & 0.708          & 0.987          & 1.000          & 1.000          \\
                                     & LFDA            & 0.134                                        & 1.000                                       & 1.000          & 1.000          & 1.000          & 1.000          & 0.118          & 0.996          & 1.000          & 1.000          & 1.000          & 1.000          \\
                                     & ARML (Ours)     & \textbf{0.057}                               & \textbf{0.203}                              & \textbf{0.527} & \textbf{0.867} & \textbf{0.971} & \textbf{0.997} & \textbf{0.053} & \textbf{0.118} & \textbf{0.209} & \textbf{0.344} & \textbf{0.572} & 0.785          \\ \bottomrule
    \end{tabular}
  }
\end{table}

\subsection{Mahalanobis 1-NN}
Certified robust errors of Mahalanobis 1-NN
with respect to different perturbation radii
are shown in Table~\ref{tab:1nn}.
It should be noted that these radii are only used to
show the experimental results,
and they are not hyperparameters.
In this Mahalanobis 1-NN case,
the proposed algorithm in Algorithm~\ref{alg:exact},
which solves \eqref{opt:exact},
can compute the exact minimal adversarial perturbation for each instance,
so the values we get in Table~\ref{tab:1nn}
are both (optimal) certified robust errors and
(optimal) empirical robust errors (attack errors).
Also, note that when the radius is 0,
the resulting certified robust error is equivalent to the clean error on the unperturbed test set.

We have three main observations from the experimental results.
First,
although NCA and LMNN achieve better clean errors (at the radius 0) than Euclidean in most datasets,
they are less robust to adversarial perturbations than Euclidean
(except the Splice dataset,
on which Euclidean performs overly poorly in terms of clean errors
and then has a large robust errors accordingly).
Both NCA and LMNN suffer from the trade-off between the clean error and the certified robust error.
Second, ARML performs competitively with NCA and LMNN in terms of clean errors (achieves the best on 4/6 of the datasets).
Third and the most importantly, ARML is much more robust than all the other methods
in terms of certified robust errors for all perturbation radii.

\subsection{Mahalanobis \texorpdfstring{$K$}{K}-NN}
For $K$-NN models, it is intractable to compute the exact minimal adversarial perturbation,
so we report both certified robust errors and empirical robust errors (attack errors).
We set $K=11$ for all the experiments.
The certified robust error can be computed by Theorem~\ref{thm:knn-lower},
which works for any Mahalanobis distance.
On the other hand,
we also conduct adversarial attacks to these models
to derive the empirical robust error
--- the lower bounds of the certified robust errors ---
via a hard-label black-box attack method,
i.e., the Boundary Attack~\cite{brendel2018decision}.
Different from the $1$-NN case,
since both adversarial attack and robustness verification are not optimal,
there will be a gap between the two kinds of robust errors.
These results are shown in Table~\ref{tab:knn}.
Note that these empirical robust errors at the radius 0 
are also the clean errors.

The three observations of Mahalanobis 1-NN also hold for the $K$-NN case:
NCA and LMNN have improved clean errors (empirical robust errors at the radius 0)
but this often comes with degraded robust errors
compared with the Euclidean distance,
while ARML achieves good robust errors as well as clean errors.
The results suggest that ARML is more robust both \emph{provably}
(in terms of the certified robust error)
and empirically (in terms of the empirical robust error).

\subsection{Comparison with neural networks}
We compare Mahalanobis 1-NN classifiers with neural networks, 
including ordinary neural networks certified
by the robustness verification method CROWN~\cite{zhang2020towards}
and randomized-smoothing neural networks
(with smoothness parameters $0.2$ and $1$)~\cite{cohen2019certified}.
The results are shown in Figure~\ref{fig:neural}.
It is shown that 
randomized smoothing encounters a trade-off between clean and robust errors,
whereas ARML does not sacrifice clean errors
compared with previous metric learning methods.

\begin{figure}[th]
  \centering
  \begin{subfigure}[b]{.45\linewidth}
    \centering
    \includegraphics[width=\linewidth]{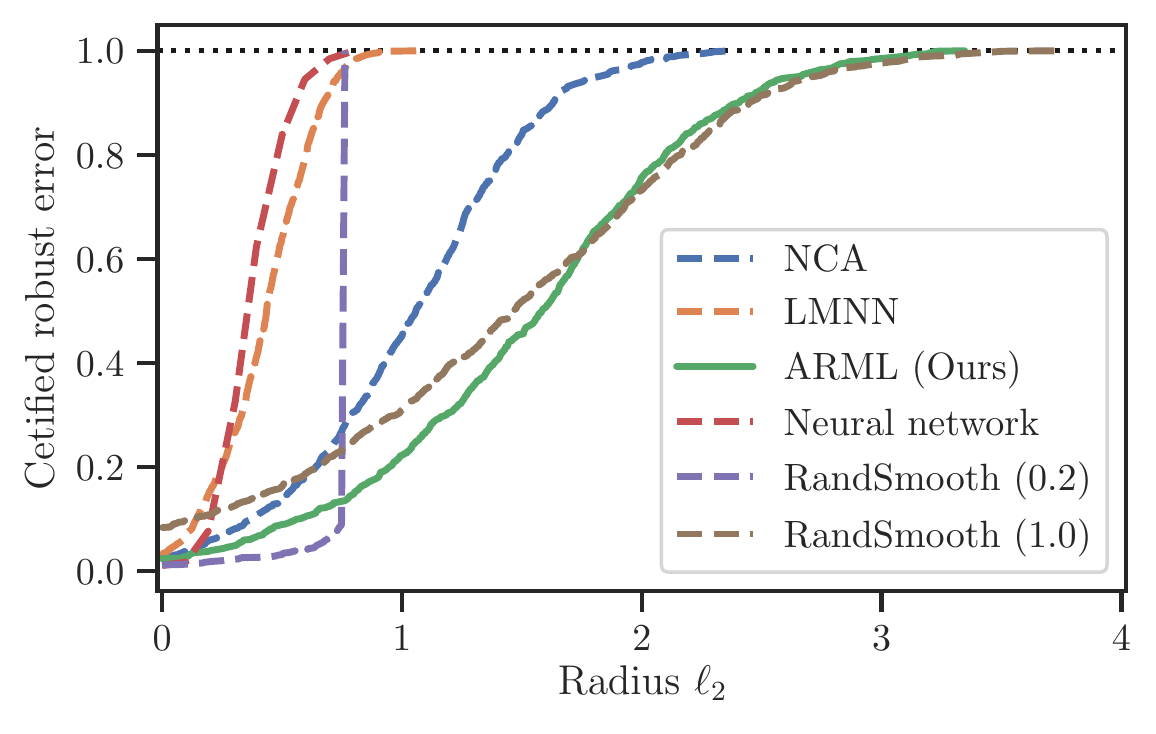}
    \caption{MNIST.}
  \end{subfigure}
  \begin{subfigure}[b]{.45\linewidth}
    \centering
    \includegraphics[width=\linewidth]{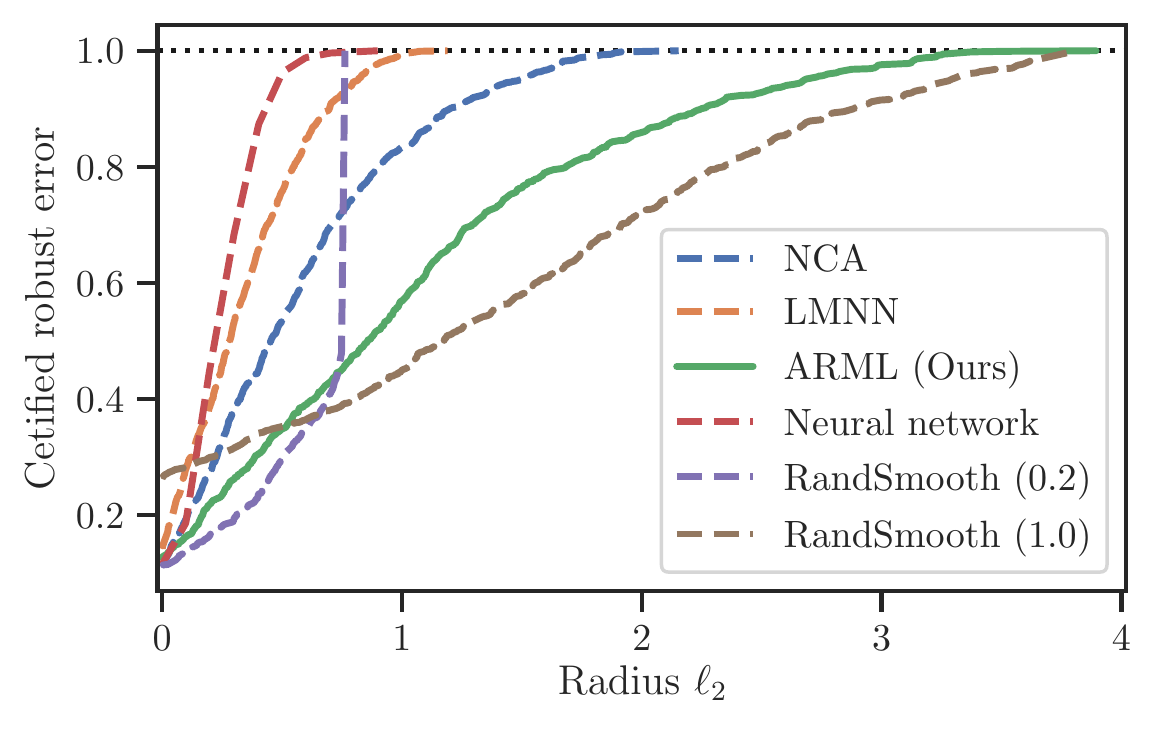}
    \caption{Fashion-MNIST.}
  \end{subfigure}
  \caption{Certified robust errors comparing neural networks.}
  \label{fig:neural}
\end{figure}

\subsection{Computational cost}
In general, computational cost is not an issue for ARML.
The average runtime (of 5 trials) of LMNN, ITML, NCA and ARML,
are 66.4s, 95.2s, 480.9s and {146.1s} respectively
on USPS for 100 iterations in total,
where to make the comparison fair,
all of the methods are run on CPU (Xeon(R) E5-2620 v4 @2.10GHz).
In fact, ARML is highly parallelizable and our implementation
also supports GPU with the PyTorch library~\cite{steiner2019pytorch}:
when running on GPU (one Nvidia TITAN Xp),
the average runtime of ARML is only {10.6s}.

\section{Related work}
\label{sec:related}



\paragraph{Metric learning}
Metric learning aims to learn a new distance metric
using supervision information
concerning the learned distance~\cite{kulis2013metric}.
In this paper, we mainly focus on the linear metric learning:
the learned distance is the squared Euclidean distance
after applying
a linear transformation to instances globally, i.e.,
the Mahalanobis distance~\cite{goldberger2004neighbourhood,davis2007information,weinberger2009distance,jain2010inductive,sugiyama2007dimensionality}.
It is noteworthy that there are also nonlinear models for metric learning,
such as kernelized metric learning~\cite{kulis2006learning,chatpatanasiri2010a},
local metric learning~\cite{frome2007learning,weinberger2008fast}
and deep metric learning~\cite{chopra2005learning,schroff2015facenet}.
Robustness verification for nonlinear metric learning
and provably robust non-linear metric learning
(certified defense for non-linear metric learning)
would be an interesting future work.


\paragraph{Adversarial robustness of neural networks}
Empirical defense is usually
able to learn a classifier
which is robust to
some specific adversarial attacks~\cite{kurakin2017adversarial,madry2018towards},
but has no guarantee for the robustness to other stronger (or unknown)
adversarial attacks~\cite{carlini2017towards,athalye2018obfuscated}.
In contrast, certified defense provides a guarantee that no adversarial examples exist
within a certain input region~\cite{wong2018provable,cohen2019certified,zhang2020towards}.
The basic idea of these certified defense methods
is to minimize the certified robust error on the training data.
However, all these certified defense methods
for neural networks rely on the assumption of smoothness of the classifier,
and hence could not be applied to the nearest neighbor classifiers.

\paragraph{Adversarial robustness and metric learning}
Some papers introduce the adversarial framework as a mining strategy,
aiming to improve classification accuracy of
metric learning~\cite{chen2018adversarial,duan2018deep,zheng2019hardness},
and others employ some metric learning loss functions
as regularization to 
improve empirical robustness of deep learning models~\cite{mao2019metric}.
However, few papers investigate adversarial robustness,
especially certified adversarial robustness of metric learning models themselves.

\paragraph{Adversarial robustness of nearest neighbor classifiers}
Most works about adversarial robustness of $K$-NN focus on adversarial attack.
Some papers propose to attack a differentiable substitute
of $K$-NN~\cite{papernot2016transferability,li2019advknn,sitawarin2020minimum},
and others formalize the attack as a list of quadratic programming problems or linear programming problems~\cite{wang2019evaluating,yang2020robust}.
As far as we know,
there is only one paper (our previous work) considering
robustness verification for $K$-NN,
but they only consider the Euclidean distance,
and no certified defense method is proposed~\cite{wang2019evaluating}.
In contrast, we propose the first adversarial verification method
and the first certified defense
(provably robust learning) method for Mahalanobis $K$-NN.


\section{Conclusion}
We propose a novel metric learning method named ARML
to obtain a robust Mahalanobis distance
that can be robust to adversarial input perturbations.
Experiments show that the proposed method can
simultaneously improve clean classification accuracy
and adversarial robustness 
(in terms of both certified robust errors and empirical robust errors)
compared with existing metric learning algorithms.

\section*{Broader impact}

In this work, we study the problem
of adversarial robustness of metric learning.
Adversarial robustness,
especially robustness verification,
is very important when deploying machine learning models
into real-world systems.
A potential risk is the research on adversarial attack,
while understanding adversarial attack is a necessary step
towards developing provably robust models.
In general, this work does not involve specific applications and ethical issues.

\section*{Acknowledgement}
This work was jointly supported by NSFC~61673201, NSFC~61921006,
NSF~IIS-1901527, NSF~IIS-2008173, ARL-0011469453, 
and Facebook.

\bibliographystyle{abbrv}
\bibliography{ref.bib}

\newpage

\appendix


\section{Optimal value of triplet problem}
\label{app:triplet}

The triplet problem is formalized as below:
\begin{align} \label{opt:triplet-rep}
  \begin{aligned}
    \min_\vdelta \
    \lVert \vdelta \rVert                                         \ \
    \text{s.t.} \
    d_\mM(\vx + \vdelta, \vx^-) \leq d_\mM(\vx + \vdelta, \vx^+).
  \end{aligned}
\end{align}
It is equivalent to the optimization
\begin{align}
  \begin{aligned}
    \min_{\vdelta} \
    \vdelta^\top \vdelta     \ \
    \text{s.t.} \
    \va^\top \vdelta \leq b,
  \end{aligned}
\end{align}
where we have
\begin{align}
  \va = \mM \left(\vx^+ - \vx^-\right),
\end{align}
\begin{align}
  b = \frac{1}{2} \left( d_\mM(\vx, \vx^+) - d_\mM(\vx, \vx^-) \right).
\end{align}

The dual function is
\begin{align}
  g(\lambda)
   & = \inf_\vdelta \ \
  \vdelta^\top \vdelta + \lambda (\va^\top \vdelta - b) \\
   & = -\frac{1}{4} \va^\top \va \lambda^2 - b \lambda,
\end{align}
where $\inf$ holds for $\vdelta = -\lambda \va / 2$.
Then the dual problem is
\begin{align}
  \max_{\lambda \geq 0} \ \
   & - \frac{1}{4} \va^\top \va \lambda^2 - b \lambda.
\end{align}
The optimal point is
\begin{align}
  \left[-\frac{2 b}{\va^\top \va}\right]_+,
\end{align}
and the optimal value is
\begin{align}
  \begin{cases}
    0                        & \text{if $b \geq 0$} \\
    \frac{b^2}{\va^\top \va} & \text{otherwise}.
  \end{cases}
\end{align}
By the Slater's condition, if $\vx^+ \neq \vx^-$ holds,
we have the strong duality.
Therefore, the optimal value of \eqref{opt:triplet-rep} is
\begin{align}
  \left[\frac{-b}{\sqrt{\va^\top \va}}\right]_+
  =
  \frac{
    \left[ d_{\mM}(\vx, \vx^-) - d_{\mM}(\vx, \vx^+) \right]_+
  }{2 \sqrt{(\vx^+ - \vx^-)^\top \mM^\top \mM (\vx^+ - \vx^-)}}.
\end{align}
In fact, it is easy to verify that even if $\vx^+ = \vx^-$ obtains, the optimal value also holds.

\section{Proof of Theorem~\ref{thm:knn-lower}}
\label{app:knn}

\begin{proof}
  Let $\epsilon^{(\sI, \sJ)}$ denote the optimal value of the inner minimization problem of \eqref{opt:knn-verification}.
  By relaxing the constraint via replacing the universal quantifier,
  we have
  \begin{align}
    \epsilon^{(\sI, \sJ)} \geq \max_{i\in \{i:y_i = y_\text{test}\} - \sI, j\in\sJ} \
    \tilde{\epsilon}(\vx_i, \vx_j, \vx_\text{test}; \mM).
  \end{align}
  Substitute it in \eqref{opt:knn-verification} and then we have
  \begin{align}
    \epsilon^*
     & \geq \min_{\sI, \sJ} \epsilon^{(\sI, \sJ)}                                                                                                                        \\
     & \geq \min_{\sI, \sJ}\ \max_{i\in\{i:y_i = y_\text{test}\}-\sI,\ j\in\sJ} \ \tilde{\epsilon}(\vx^+_i, \vx^-_j, \vx_\text{test})                                    \\
     & \geq \min_{\sI, \sJ}\ \max_{j\in\sJ}\ \max_{i\in[\{i:y_i = y_\text{test}\}-\sI} \ \tilde{\epsilon}(\vx^+_i, \vx^-_j, \vx_\text{test})                             \\
     & \geq \min_{\sI, \sJ}\ \max_{j\in\sJ}\ \kthmax_{i\in\{i:y_i = y_\text{test}\}} \ \tilde{\epsilon}(\vx^+_i, \vx^-_j, \vx_\text{test})                               \\
     & \geq \min_{\sI, \sJ}\ \kthmin_{j\in\{j: y_j \neq y_\text{test}\}} \ \kthmax_{i\in\{i:y_i = y_\text{test}\}} \ \tilde{\epsilon}(\vx^+_i, \vx^-_j, \vx_\text{test}) \\
     & = \kthmin_{j\in\{j: y_j \neq y_\text{test}\}} \ \kthmax_{i\in\{i:y_i = y_\text{test}\}} \ \tilde{\epsilon}(\vx^+_i, \vx^-_j, \vx_\text{test})
  \end{align}
\end{proof}

\section{Details of computing exact minimal adversarial perturbation of Mahalanobis 1-NN}
\label{app:1nn}

The overall algorithm is displayed in Algorithm~\ref{alg:exact}.
We denote $\epsilon^{(j)}$ as the optimal value of the inner minimization problem with respect to $j$,
and denote $\underline{\epsilon}^{(j)}$ as its lower bound.
We first sort the subproblems according to the ascending order of
$\lVert \vx_j - \vx_\text{test} \rVert$ for $\{j: y_j \neq y_\text{test}\}$.
For every subproblem, we compute the lower bound of its optimal value.
If the optimal value is too large,
we just screen the subproblem safely without solving it exactly.

\begin{algorithm}[ht]
  \SetAlgoLined
  \KwIn{Test instance $(\vx_\text{test}, y_\text{test})$,
  dataset $\sS = \{(\vx_i, y_i)\}_{i=1}^N$.}
  \KwOut{Perturbation norm $\epsilon$.}
  Initialize $\epsilon= \infty$ \;
  Sort $\{j: y_j \neq y_\text{test}\}$ by the ascending order of $d_\mM(\vx_\text{test}, \vx_j)$\;
  \For{$j: y_j \neq y_\text{test}$ according to the ascending order}{
    Compute a lower bound $\underline{\epsilon}^{(j)}$ of the inner minimization corresponding to $j$\;
    \If(){$\underline{\epsilon} < \epsilon$}{
      Solve the inner minimization problem exactly via the greedy coordinate ascent method
      and derive the optimal value $\epsilon^{(j)}$\;
      \If(){$\epsilon^{(j)} < \epsilon$}{
        $\epsilon = \epsilon^{(j)}$
      }
    }
  }
  \caption{Computing the minimal adversarial perturbation for Mahalanobis 1-NN}
  \label{alg:exact}
\end{algorithm}

\subsection{Greedy coordinate ascent (descent)}

For the subproblem we have to solve exactly, we employ the greedy coordinate ascent method.
Note that the inner minimization problem of \eqref{opt:exact} is a convex quadratic programming problem.
We solve the problem by dealing with its dual formulation.
The greedy coordinate ascent method is used because the optimal dual variables are very sparse.
The algorithm is shown in Algorithm~\ref{alg:gcd}.
At every iteration, only one dual variable is updated.

\begin{algorithm}[ht]
  \SetAlgoLined
  \KwIn{$\mP$, $\vq$, $\epsilon$, $T$.}
  $\vx \leftarrow \vzero$, $\vg \leftarrow \mP\vx+\vq$\;
  \For{$t=0$ to $T-1$}{
    $\forall i$, $y_i \leftarrow \max\left(x_i - \frac{g_i}{p_{i,i}}, 0\right) - x_i$\;
    $i^* \leftarrow \argmax_i \lvert y_i \rvert$ \tcp*[l]{choose a coordinate}
    \If{}{
      break\;
    }
    $x_{i^*} \leftarrow x_{i^*} + y_{i^*}$ \tcp*[l]{update the solution}
    $\vg \leftarrow \vg + y_{i^*} \vp_{i^*}$ \tcp*[l]{update the gradient}
  }
  \KwOut{$\vx$.}
  \caption{Greedy coordinate descent for QP: $\min_{\vx\geq 0}\frac{1}{2} \vx^\top \mP \vx + \vq^\top \vx$}
  \label{alg:gcd}
\end{algorithm}

\subsection{Lower bound of inner minimization problem}

The following theorem is dependent on the solution of the triplet problem.



\begin{theorem}\label{thm:lower}
  The optimal value $\epsilon^{(j)}$ of the inner minimization of \eqref{opt:exact} with respect to $j$
  is lower bounded as
  \begin{align}
    \epsilon^{(j)} \geq \max_{i: y_i = y_\text{test}}
    \left[
      \tilde{\epsilon} (\vx_i, \vx_j, \vx_\text{test}; \mM)
      \right]_+.
  \end{align}
\end{theorem}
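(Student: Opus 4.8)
The plan is to prove the bound by \emph{constraint relaxation}, exploiting the fact that the inner minimization of \eqref{opt:exact} (with respect to the fixed different-class index $j$) carries one constraint for every same-class index $i$, whereas the triplet problem \eqref{opt:triplet} carries only one. Concretely, the inner problem asks for the smallest $\vdelta$ making $\vx_\text{test} + \vdelta$ closer, in $d_\mM$, to $\vx_j$ than to \emph{every} $\vx_i$ with $y_i = y_\text{test}$ simultaneously, so weakening the ``for all $i$'' requirement can only shrink the optimal value.

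First I would fix an arbitrary index $i$ with $y_i = y_\text{test}$ and drop all constraints except the one associated with that $i$. Since this enlarges the feasible set, the minimum of the single-constraint problem is no larger than $\epsilon^{(j)}$. Next I would observe that this single-constraint problem is exactly the triplet problem \eqref{opt:triplet} under the identification $\vx^+ = \vx_i$, $\vx^- = \vx_j$, and $\vx = \vx_\text{test}$: the surviving constraint $d_\mM(\vx_\text{test} + \vdelta, \vx_j) \leq d_\mM(\vx_\text{test} + \vdelta, \vx_i)$ matches the triplet constraint with the same-class point $\vx_i$ in the role of $\vx^+$ and the different-class point $\vx_j$ in the role of $\vx^-$. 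By the closed-form optimal value of the triplet problem derived in Appendix~\ref{app:triplet}, its minimum equals $[\tilde{\epsilon}(\vx_i, \vx_j, \vx_\text{test}; \mM)]_+$, whence $\epsilon^{(j)} \geq [\tilde{\epsilon}(\vx_i, \vx_j, \vx_\text{test}; \mM)]_+$ for this particular $i$.

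Finally, since the index $i$ was arbitrary and each choice yields a valid lower bound on the \emph{same} quantity $\epsilon^{(j)}$, I would take the maximum over all $i$ with $y_i = y_\text{test}$ to obtain the sharpest of these bounds, which is precisely the claimed inequality. There is no genuine obstacle here; the only point requiring care is to keep the roles of $\vx^+$ and $\vx^-$ straight when matching to the triplet problem (same-class as $\vx^+$, different-class as $\vx^-$), so that the argument order inside $\tilde{\epsilon}$ is correct and the nonnegativity clamp $[\cdot]_+$ is inherited directly from the triplet optimal value.
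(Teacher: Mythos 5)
Your proposal is correct and follows essentially the same route as the paper's proof: relaxing the ``for all $i$'' constraint to a single $i$ (equivalently, replacing the universal quantifier), recognizing the resulting single-constraint problem as the triplet problem with $\vx^+ = \vx_i$, $\vx^- = \vx_j$, $\vx = \vx_\text{test}$, and then taking the maximum over $i$ of the resulting lower bounds. No gaps.
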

\begin{proof}
  Relaxing the constraint of \eqref{opt:exact} by means of replacing the universal quantifier,
  we know $\epsilon^{(j)}$ is lower bounded by the optimal value of the following optimization problem
  \begin{align}
    \max_{i:y_i=y_\text{test}}
    \min_{\vdelta_{i,j}}  \ \
     & \lVert \vdelta_{i, j} \rVert                                                                          \\
    \text{s.t.} \ \
     & d_{\mM}(\vx_\text{test} + \vdelta_{i,j}, \vx_j) \leq d_{\mM}(\vx_\text{test} + \vdelta_{i,j}, \vx_i).
  \end{align}
  Obviously, the optimal value of the inner problem is $\left[
      \tilde{\epsilon} (\vx_i, \vx_j, \vx_\text{test}; \mM)
      \right]_+$.
\end{proof}
In this way, we could derive a lower bound of the optimal value in closed form.

\section{Experimental details}
\label{app:exp}

\paragraph{Datasets}
Dataset statistics and test errors (on all test instances)
of Euclidean $K$-NN with $K=11$ are shown in Table~\ref{tab:data}.
All training data are used to learn metrics, and 1,000 instances are randomly sampled to compute certified robust errors.

\begin{table}[ht]
  \centering
  \caption{Dataset statisitcs.}
  \label{tab:data}
  \resizebox{\linewidth}{!}{
    \begin{tabular}{@{}lrrrrrr@{}}
      \toprule
                    & \# features & \# classes & \# train & \# test & 1-NN test error & $K$-NN test error \\ \midrule
      MNIST         & 784         & 10         & 60,000   & 10,000  & 0.031           & 0.033             \\
      Fashion-MNIST & 784         & 10         & 60,000   & 10,000  & 0.150           & 0.150             \\
      Splice        & 60          & 2          & 1,000    & 2,175   & 0.295           & 0.291             \\
      Pendigits     & 16          & 10         & 7,494    & 3,498   & 0.023           & 0.027             \\
      Satimage      & 36          & 6          & 4,435    & 2,000   & 0.112           & 0.106             \\
      USPS          & 256         & 10         & 7,291    & 2,007   & 0.049           & 0.060             \\ \bottomrule
    \end{tabular}
  }
\end{table}

\paragraph{Hyperparameters}
Hyperparameters of our ARML algorithm are fixed across all datasets.
Specifically, the size of the neighborhood
where $\operatorname{randnear}^+$ and $\operatorname{randnear}^-$ sample random instances
is 10.
In other words, at every iteration, we sample one instance
from the nearest 10 instances in the same class with the test instance,
and sample one instance from the nearest 10 instances
in the different classes from the test instance.
We employ the Adam algorithm~\cite{kingma2015adam} to update parameters with gradients
and the parameters is in the default setting (learning rate: 0.001, betas: $(0.9, 0.999)$).
The number of epochs is 1,000.
The loss function is the negative loss.

\paragraph{Hyperparameter sensitivity} We investigate the sensitivity
of the size of neighborhood used for $\operatorname{randnear}^+$ and $\operatorname{randnear}^-$.
We plot the robust error curves against the $\ell_2$ radius for different neighborhood sizes
in Figure~\ref{fig:sensitivity-splice} and Figure~\ref{fig:sensitivity-satimage}.
It suggests that ARML is not very sensitive to this hyperparameter
in terms of certified and empirical robust errors.

\begin{figure}[ht]
  \centering
  \begin{subfigure}[b]{.32\linewidth}
    \centering
    \includegraphics[width=\linewidth]{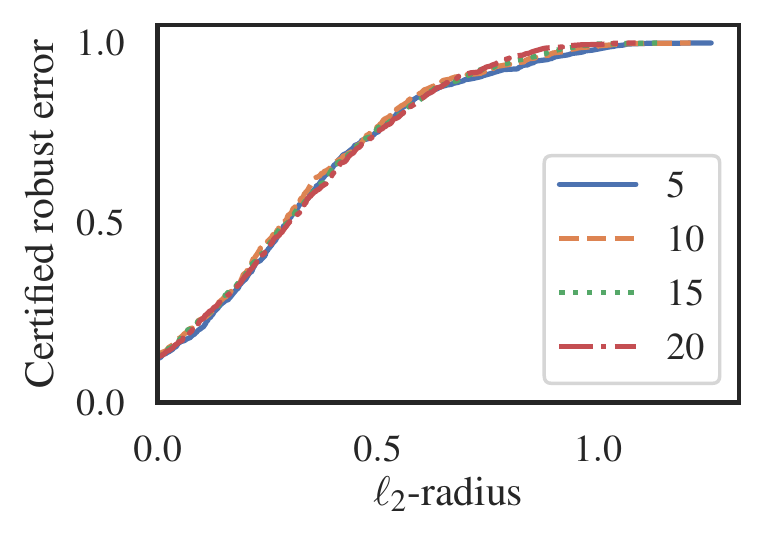}
    \caption{1-NN certified robust error.}
  \end{subfigure}
  \begin{subfigure}[b]{.32\linewidth}
    \centering
    \includegraphics[width=\linewidth]{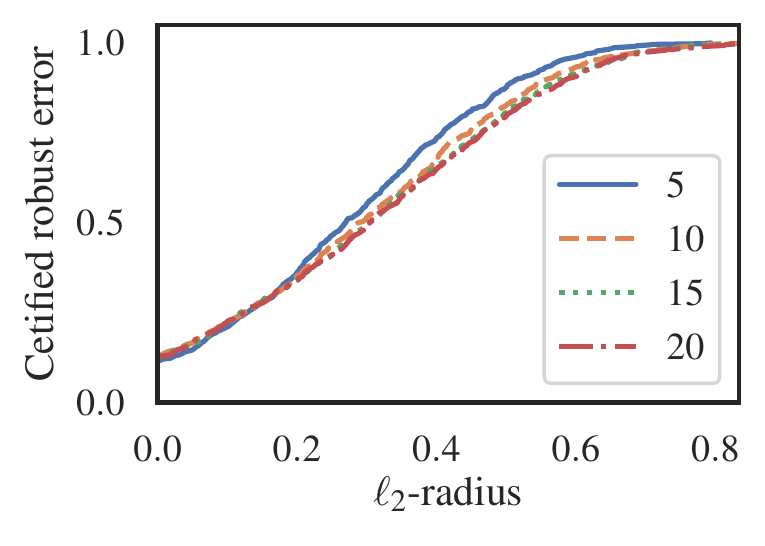}
    \caption{$K$-NN certified robust error.}
  \end{subfigure}
  \begin{subfigure}[b]{.32\linewidth}
    \centering
    \includegraphics[width=\linewidth]{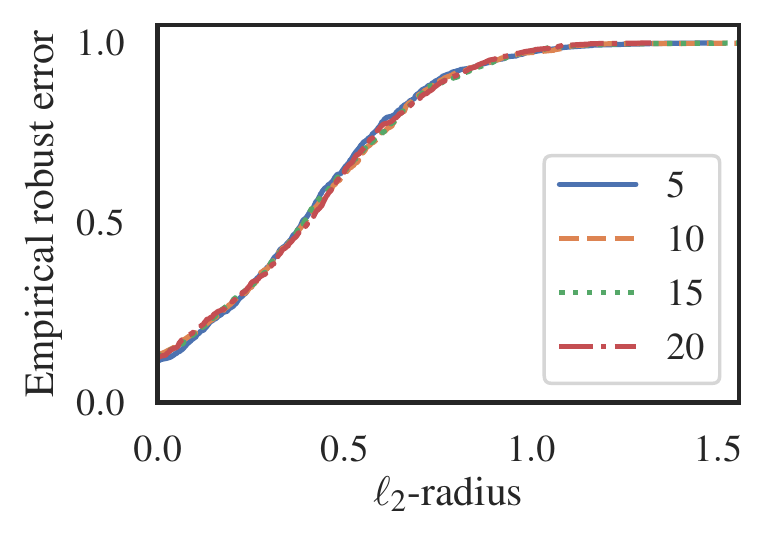}
    \caption{$K$-NN empirical robust error.}
  \end{subfigure}
  \caption{Sensitivity to neighborhood size on Splice.}
  \label{fig:sensitivity-splice}
\end{figure}

\begin{figure}[ht]
  \centering
  \begin{subfigure}[b]{.32\linewidth}
    \centering
    \includegraphics[width=\linewidth]{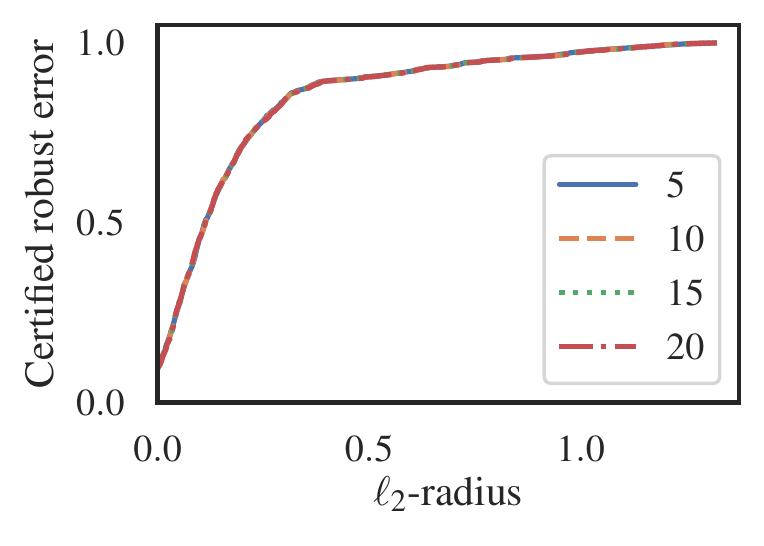}
    \caption{1-NN certified robust error.}
  \end{subfigure}
  \begin{subfigure}[b]{.32\linewidth}
    \centering
    \includegraphics[width=\linewidth]{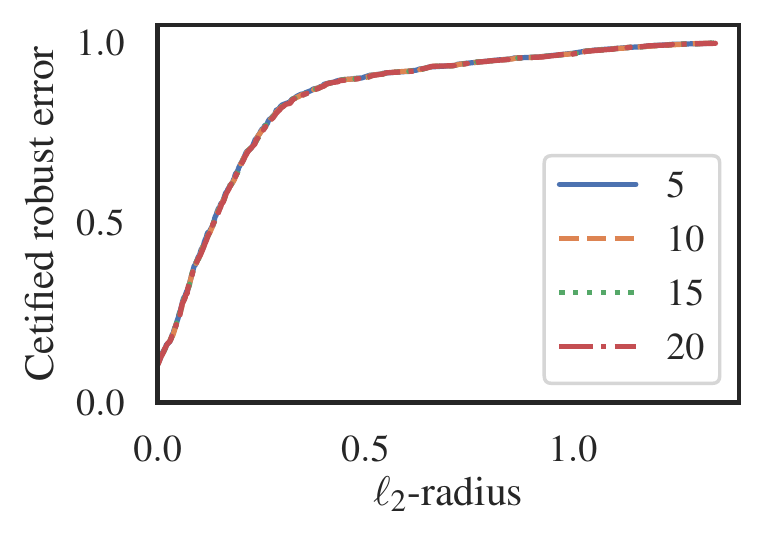}
    \caption{$K$-NN certified robust error.}
  \end{subfigure}
  \begin{subfigure}[b]{.32\linewidth}
    \centering
    \includegraphics[width=\linewidth]{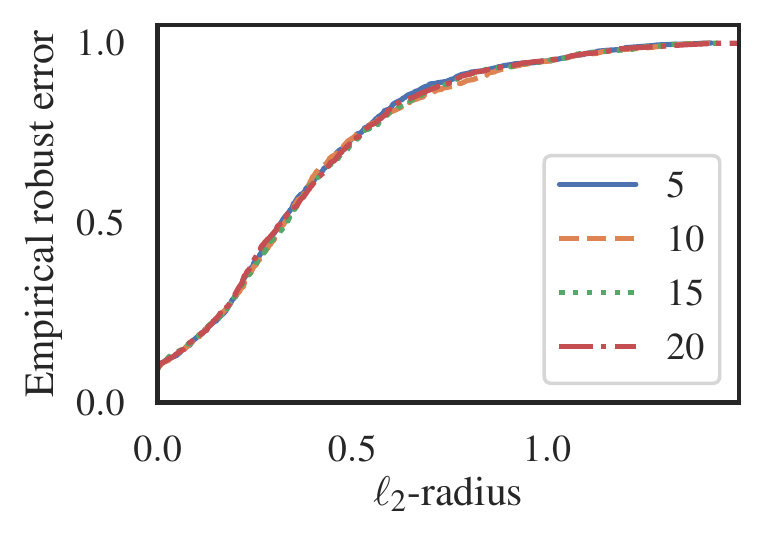}
    \caption{$K$-NN empirical robust error.}
  \end{subfigure}
  \caption{Sensitivity to neighborhood size on Satimage.}
  \label{fig:sensitivity-satimage}
\end{figure}


\paragraph{Implementations of NCA, LMNN, ITML and LFDA}
We use the implementations of the metric-learn library~\cite{vazelhes2019metric} for NCA, LMNN, ITML and LFDA.
Similar to ARML, hyperparameters are fixed across all datasets and are in the default setting.
In particular, the maximum numbers of iterations for NCA, LMNN and ITML are 1,00, 1000 and 1,000 respectively.

\end{document}